%% file: main.tex
\documentclass[10pt]{article} 

\usepackage[preprint]{rlj} 

\usepackage[utf8]{inputenc} 
\usepackage[T1]{fontenc}    
\usepackage{hyperref}       
\usepackage{url}            
\usepackage{booktabs}       
\usepackage{amsfonts}       
\usepackage{amsmath}
\usepackage{amsthm}         
\usepackage{nicefrac}       
\usepackage{microtype}      
\usepackage{xcolor}         
\usepackage{algorithm}
\usepackage[noend]{algpseudocode}
\newtheorem{proposition}{Proposition}

\usepackage{amssymb}            
\usepackage{mathtools}          
\usepackage{mathrsfs}           
\usepackage{graphicx}           
\usepackage{subcaption}         
\usepackage[space]{grffile}     
\usepackage{url}                

\title{Data-Augmented Game Starts for Accelerating \\Self-Play Exploration in Imperfect Information Games
}

\setrunningtitle{Data-Augmented Game Starts for Accelerating Self-Play Exploration in Imperfect Information Games}

\author{JB Lanier\textsuperscript{1, $*$}, 
Nathan Monette\textsuperscript{2, $*$}, Pierre Baldi\textsuperscript{1}, Roy Fox\textsuperscript{1}}

\emails{jblanier@uci.edu, nathan.monette@cs.ox.ac.uk}

\affiliations{
$^{1}$\textbf{University of California Irvine}\\
$^{2}$\textbf{University of Oxford}
\par 
$^*$ Equal contribution
}

\contribution{
    We provide a new benchmark for 2p0s games with long horizons and exploration challenges with fast analytic exploitability calculations.
    }
    {
    Previous work in imperfect information games \citep{rudolph2025reevaluatingpolicygradientmethods, sokota2023a} use environments that are easy to compute optimality, but which have little or no elements of exploration with sparse rewards.
    }

\contribution{
    We show that DAGS can enable regularized policy gradient algorithms to reach low exploitability in games with significantly larger exploration challenges under a fixed computational budget.
    }
    {
    To achieve good performance, we use a baseline solver that regularizes toward an exponential moving average of the policy.
    }

\contribution{We demonstrate that training with DAGS can bias the learned equilibrium in imperfect-information games, and we provide a mitigation to this bias problem.
    }{Prior related work does not consider imperfect information games, the setting where this issue arises.}

\keywords{Multi-Agent Reinforcement Learning, Imperfect Information Games} 

\newcommand{\methodlong}{Data-Augmented Game Starts}
\newcommand{\methodshort}{DAGS}

\summary{
Finding approximate equilibria for large-scale imperfect-information competitive games such as StarCraft, Dota, and CounterStrike remains computationally infeasible due to sparse rewards and challenging exploration over long horizons. We propose \methodlong{} (\methodshort{}), a starting-state sampling strategy for two-player zero-sum (2p0s) games that initializes reinforcement learning episodes at intermediate states drawn from offline gameplay data. Our key assumption is that demonstrations from skilled humans provide good coverage of high-level strategies relevant to equilibrium play, allowing agents to explore strategically relevant subgames without first needing to learn policies that reach them.

We evaluate \methodshort{} using synthetic datasets and analytically tractable, long-horizon control variants of two-player Kuhn Poker, Goofspiel, and a counterexample game designed to penalize biased beliefs over hidden information. Under fixed computational budgets, \methodshort{} enables regularized policy gradient methods to solve games with significantly more challenging exploration. We show that augmenting starting state distributions when solving imperfect information games can lead to biased equilibria, and we provide a straightforward mitigation in the form of multi-task observation flags.
}

\begin{document}

\makeCover  
\maketitle  

\begin{abstract}
Finding approximate equilibria for large-scale imperfect-information competitive games such as StarCraft, Dota, and CounterStrike remains computationally infeasible due to sparse rewards and challenging exploration over long horizons. In this paper, we propose a multi-agent starting-state sampling strategy designed to substantially accelerate online exploration in regularized policy-gradient game methods for two-player zero-sum (2p0s) games. Motivated by an assumption that offline demonstrations from skilled humans can provide good coverage of high-level strategies relevant to equilibrium play, we propose the initialization of reinforcement learning data collection at intermediate states sampled from offline data to facilitate exploration of strategically relevant subgames. Referring to this method as \methodlong{} (\methodshort{}), we perform experiments using synthetic datasets and analytically tractable, long-horizon control variants of two-player Kuhn Poker, Goofspiel, and a counterexample game designed to penalize biased beliefs over hidden information.
Under fixed computational budgets, \methodshort{} enables regularized policy gradient methods to achieve lower exploitability in games with significantly more challenging exploration. We show that augmenting starting state distributions when solving imperfect information games can lead to biased equilibria, and we provide a straightforward mitigation to this in the form of multi-task observation flags. Finally, we release a new set of benchmark environments that drastically increase exploration challenges and state counts in existing OpenSpiel games while keeping exploitability measurements analytically tractable.
\end{abstract}

\input{sections/intro}

\input{sections/related_work}
\input{sections/background}
\input{sections/method}

\input{sections/control_games}
\input{sections/nate_experiments}

\appendix

\subsubsection*{Acknowledgments}
\label{sec:ack}
Author Lanier was supported by a Hasso Plattner Foundation Fellowship. This work was also funded in part by NSF Award 232178 and BSF Grant 2024079.


\bibliography{citations.bib}
\bibliographystyle{rlj}

\beginSupplementaryMaterials

\input{sections/appendix}


\end{document}

%% file: sections/intro.tex
\section{Introduction}

Deep reinforcement learning (RL) has achieved remarkable success in large-scale imperfect-information competitive games, surpassing human professionals in StarCraft \citep{Vinyals2019StarCraft}, Dota \citep{Berner2019Dota2W}, and Poker \citep{brown2020poker}. However, these successes require massive computational resources and time, rendering large-scale game solving impractical for most practitioners. One key factor underlying this high computational cost is the difficulty of exploration in environments with sparse rewards, long decision horizons, and complex hidden-state dynamics. Especially in video games, traditional methods may spend enormous amounts of computation repeatedly exploring long sequences of coordinated intricate actions, even though strategically interesting decisions may often occur only at specific junctures.

A widely-adopted approach to mitigate exploration
difficulty is initializing RL policies with behavioral cloning (BC) from
human demonstrations \citep{silver2016mastering, Vinyals2019StarCraft}.
While effective for bootstrapping basic competence, BC depends on having
enough state-action demonstrations to learn the necessary skills from
which self-play can extrapolate. An alternative use of the same offline data is to reset
training episodes to intermediate states visited in demonstrations,
leveraging the data for \textit{state coverage} rather than
\textit{policy initialization}. This approach needs only a set of visited states rather than full state-action trajectories, which may be available in a broader class of domains. Furthermore, it requires only that the dataset
contains states from which self-play can discover important skills, making it more likely to be viable when collected demonstrations are too few or in an incomplete format for effective behavioral cloning.

Single-agent RL has made substantial gains in sample efficiency by leveraging offline demonstrations more actively through intermediate-state resets, beginning online episodes from informative, intermediate states recorded in demonstrations rather than always restarting from initial conditions \citep{tao2024reverseforwardcurriculumlearning}. Motivated by this success, we propose extending intermediate-state resets to multi-agent competitive settings for the purpose of accelerating equilibrium convergence in large-scale, imperfect-information games.

We focus on games that intuitively decompose into a small number of recurring high-level strategic choices separated by long, highly coordinated control segments. In such games (e.g. real-time strategy games like StarCraft and first-person shooters like CounterStrike), the horizon and state spaces are enormous, but the effective set of distinct coarse strategic options that players employ is comparatively small. This makes it plausible to collect human gameplay that reaches a broad range of strategically relevant subgames, providing a rich set of starting states from which self-play can efficiently discover equilibrium strategies.

\begin{figure}[t!]
    \centering
    \includegraphics[width=\linewidth]{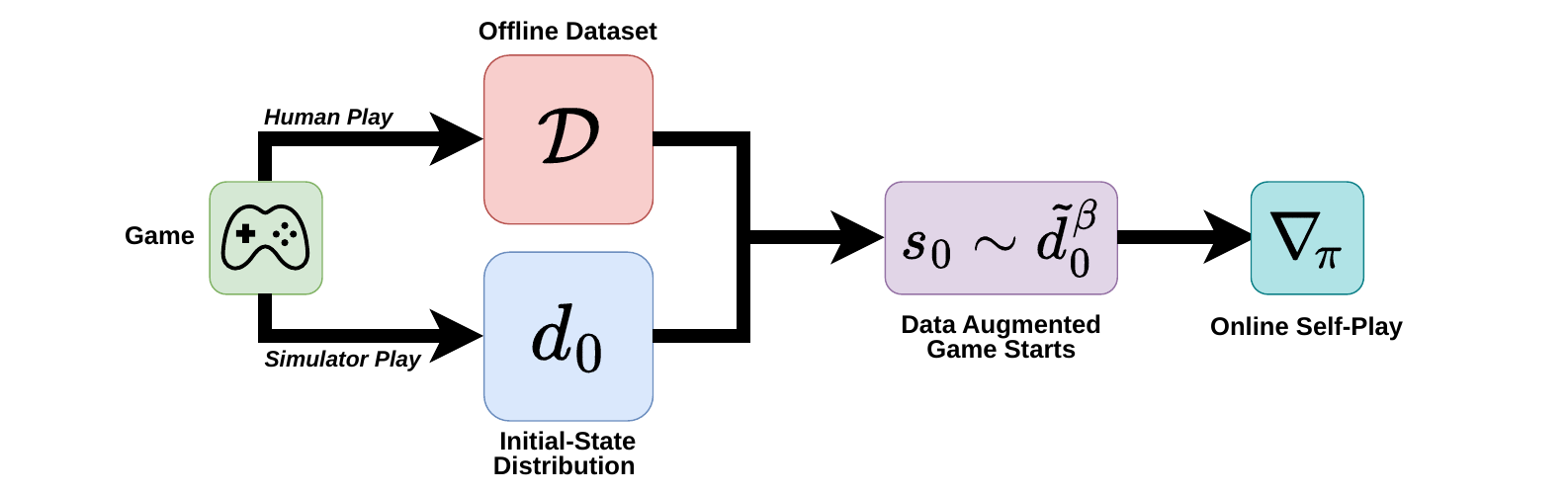}
    \caption{Overview of DAGS. Given a two-player zero-sum game with initial-state distribution $d_0$ and an offline dataset of states $\mathcal{D}$ collected from proficient gameplay, DAGS produces an augmented initial-state distribution $\tilde{d}^\beta_0 = (1-\beta)\,d_0 + \beta\,d_{\mathcal{D}}$ that includes intermediate states from $\mathcal{D}$. Episodes sampled from $\tilde{d}^\beta_0$ are used for self-play training with regularized policy gradient methods. By starting episodes from strategically relevant intermediate states, DAGS reduces the cost of coordinated exploration and enables agents to converge to low-exploitability play in games that are otherwise too large to solve within a limited compute budget.}
    \label{fig:DAGS}
\end{figure}

Our proposed method, \methodlong{} (\methodshort{}), incorporates multi-agent intermediate-state resets into online RL training. \methodshort{} randomly initializes episodes from global states sampled along offline trajectories generated by proficient play that doesn't need to be strategically optimal (such as typical human gameplay). Our main data requirement is that these offline states provide broad coverage of strategically distinct game situations. By resetting into such states, training can concentrate exploration on resolving high-level strategic interactions rather than learning or relearning the long coordinated action sequences needed to reach them from the initial state, leading to faster convergence toward low-exploitability play.

Evaluating equilibrium convergence in large-scale games is difficult due to the computational infeasibility of exact exploitability calculations. To measure learning performance in larger games, we also introduce a set of analytically tractable benchmarks that extend standard OpenSpiel two-player zero-sum games, such as Kuhn Poker and Goofspiel, with complex gridworld control subtasks. In these benchmarks, agents must navigate to specific locations to execute their intended inner-game actions, significantly amplifying the exploration challenge. Crucially, policies learned in these gridworld-extended benchmarks can be cheaply analytically reduced to equivalent policies in their simpler original inner-game counterparts, thus enabling exact exploitability calculations despite large state spaces and episode lengths.

Using these novel benchmarks and synthetic offline data, we empirically demonstrate that \methodshort{} significantly extends the exploration scale of imperfect information games that can be solved within fixed computational budgets using regularized Proximal Policy Optimization (PPO) self-play \citep{rudolph2025reevaluatingpolicygradientmethods}.

However, modifying the starting-state distribution in imperfect-information games can, in principle, distort agents’ beliefs over private information and furthermore change a game's equilibria. We provide a counterexample game illustrating this risk, as well as a straightforward mitigation to the issue. By employing multi-task learning and jointly solving the original game alongside the easier DAGS version of the game, with an in-observation flag for the identity of the game, this equilibrium bias issue can be significantly reduced.

In summary, our contributions are as follows:
\begin{itemize}
    \item We provide a new set of benchmark games for two-player zero-sum settings with long horizons and exploration challenges, designed so that exploitability can be computed analytically.\footnote{Code and environments will be released in a future revision.}
    \item We show that \methodshort{} enables regularized policy gradient algorithms to solve games with significantly larger exploration challenges under a fixed computational budget.
    \item We demonstrate that training with \methodshort{} can bias the learned equilibrium in imperfect-information games, and we provide a mitigation to this bias problem.
\end{itemize}

%% file: sections/related_work.tex
\section{Related Work}
\label{sec:related_work}

\subsection{Competitive Game Solving}

Deep reinforcement learning has achieved landmark results in large-scale competitive games, including Go \citep{silver2016mastering}, StarCraft \citep{Vinyals2019StarCraft}, Dota \citep{Berner2019Dota2W}, Poker \citep{brown2018libratus}, and Stratego \citep{perolat2022mastering, sokota2025stratego}. However, these successes generally have come at the cost of massive computational resources. We focus on reducing the computational cost of solving games that combine long-horizon coordinated control with strategic decision-making under partial observability, a structure common in first-person shooters (e.g., CounterStrike) and real-time strategy games (e.g., StarCraft). In such games, state spaces are enormous and episodes are long, yet the space of high-level strategic decisions is comparatively small. Much of the exploration difficulty lies in reaching strategically relevant states rather than in resolving the strategic interactions themselves. We aim to reduce this cost in model-free self-play by leveraging gameplay data from proficient players to initialize exploration from the states that they naturally visit.

Core approaches to two-player zero-sum imperfect-information game solving include population-based methods such as fictitious play \citep{brown:fp1951}, NFSP \citep{heinrich2016deepreinforcementlearningselfplay}, PSRO \citep{lanctot2017psro, mcaleer2020pipeline, mcaleer2021xdo, mcaleer2024sppsro}, NeuPL \citep{liu2022neupl}; regret-minimization methods such as DREAM \citep{steinberger2020dream} and ESCHER \citep{mcaleerescher}; and regularized policy-gradient methods including NeuRD \citep{omidshafiei2020neurd}, R-NaD \citep{perolat2022mastering}, and magnetic mirror descent \citep{sokota2023a, rudolph2025reevaluatingpolicygradientmethods}. While our approach, augmenting the starting-state distribution, is in principle compatible with any of these families, we focus on its interaction with regularized policy gradients.

In particular, \citet{sokota2023a} show that policy gradients with strong entropy regularization toward a uniform ``magnet'' policy converge to quantal-response equilibria, and that this can be implemented as a modification to PPO self-play \citep{rudolph2025reevaluatingpolicygradientmethods}. Concurrent work by \citet{maidment2026emagnet} extends the moving-magnet concept from \citet{sokota2023a} to deep RL and PPO by regularizing the current policy via KL divergence toward a parameter-space exponential moving average of itself, rather than toward the uniform distribution. This yields lower exploitability across several benchmark games compared to uniform regularization. We use both uniform-magnet regularization \citep{rudolph2025reevaluatingpolicygradientmethods} and EMA-magnet (EMAg) regularization \citep{maidment2026emagnet} as our baseline PPO-based two-player zero-sum game solvers on which we investigate DAGS.

\subsection{State resets and curriculum learning for exploration}
Manipulating starting-state distributions is a proven technique for improving exploration in single-agent reinforcement learning \citep{narvekar2020curriculumlearningreinforcementlearning}. DAGS extends a simple variant of intermediate-state resets, sampling human-visited states from offline data as episode starts, to two-player zero-sum imperfect-information games. In this setting, modifying the start distribution introduces a unique challenge: it can bias the learned equilibrium. We offer a straightforward mitigation via multi-task learning.

In single-agent RL, Reverse Curriculum Generation \citep{florensa2017reverse} generates starts near goal states and progressively expands to harder starting regions, requiring no demonstrations. \citet{salimans2018learning} show that resetting episodes to states drawn from a single demonstration reduces exploration complexity from exponential to quadratic, an approach whose core mechanism, resetting to offline trajectory states, is shared by DAGS. Backplay \citep{resnick2018backplay} schedules demonstration-based resets backward over training. Go-Explore \citep{ecoffet2021goexplorenewapproachhardexploration} maintains an archive of discovered states and resets to them for continued exploration in hard-exploration Atari domains. More recently, JSRL \citep{Ikechukwu2023jumpstart} uses a guide policy to create an explicit curriculum of starting states, and RFCL \citep{tao2024reverseforwardcurriculumlearning} combines reverse and forward scheduling along demonstrations for extreme sample efficiency.

Our approach is most closely related to \citet{salimans2018learning}, which uses a core mechanism identical to ours, resetting episodes to states drawn from offline trajectory data, in the single-agent setting. DAGS extends this to two-player zero-sum self-play with a deliberately simple curriculum design (uniform sampling with a mixing parameter $\beta$). A primary focus and novel contribution of this work is in demonstrating and mitigating the belief bias that start-state augmentation can introduce in imperfect-information games. The only data requirement for DAGS is offline gameplay states that reach hard-to-discover regions of the game tree, rather than near-strategically-optimal demonstrations.

%% file: sections/background.tex
\section{Preliminaries}
\label{sec:preliminaries}

We model our environments as two-player zero-sum partially observable stochastic games (2p0s POSGs) with finite horizon. A 2p0s POSG is a tuple
\[
  \mathcal{G}
  =
  (N, S, A, O, T, \Omega, r, \gamma, d_0),
\]
where $N = \{1,2\}$ is the set of players, $S$ is the state space, $A = A_1 \times A_2$ is the joint action space, $O = O_1 \times O_2$ is the joint observation space, $T$ is the transition kernel, $\Omega$ is the observation function, $r = (r_1,r_2)$ are the per-player rewards with $r_1 = -r_2$, $\gamma \in [0,1)$ is the discount factor, and $d_0$ is the initial-state distribution. Each player $i$ follows a stochastic policy $\pi_i(a \mid o)$ mapping observations to action distributions, and a joint policy $\pi = (\pi_1,\pi_2)$ together with $d_0$ and $T$ induces a distribution over trajectories and returns. We evaluate learned policies using \emph{exploitability}, defined as the average value gain a worst-case best-response opponent can obtain against the current joint policy. Exploitability is zero if and only if the joint policy is a Nash equilibrium.

We build on two regularized PPO self-play solvers. PPO-Uniform \citep{rudolph2025reevaluatingpolicygradientmethods} regularizes toward a uniform ``magnet'' policy, and PPO-EMAg \citep{maidment2026emagnet} regularizes toward an exponential moving average of the policy parameters. We represent the policy with a neural network $\pi_\theta$ and use a shared parameterization for both players in symmetric self-play, with the observation encoding player identity.

%% file: sections/method.tex
\section{Data-Augmented Game Starts (DAGS)}
\label{sec:dags}

DAGS augments self-play training with intermediate-state resets drawn from offline trajectories. We first define the offline state dataset and the augmented initial-state distribution, then describe how DAGS integrates into regularized PPO self-play.

\subsection{Offline state dataset}

We assume access to an offline dataset of global states
\(
  \mathcal{D} = \{ s_m \}_{m=1}^M
\)
collected from trajectories generated by a behavioral policy
\(
  \hat{\pi} = (\hat{\pi}_1, \hat{\pi}_2)
\).
Formally, these states are sampled from the state visitation distribution of $\hat{\pi}$ in the underlying POSG,
\(
  s_m \sim d_{\hat{\pi}},
\)
where $d_{\hat{\pi}}$ denotes the state visitation distribution induced by $(d_0, \hat{\pi})$. In practice, $\hat{\pi}$ is chosen to be competent in the control aspects of the game (e.g., executing basic game mechanics like movement and aiming), but not necessarily strategically near equilibrium. This reflects realistic settings where human or scripted gameplay is available that exhibits high control skill but suboptimal strategy.

Each entry $s_m$ in $\mathcal{D}$ is a full environment state in the POSG, including all hidden information and simulator variables. When we reset to $s_m$ during training, each player $i$ observes only their own observation $o^i = \Omega(s_m, i)$, and the environment continues to evolve according to the original transition dynamics $T$. Thus DAGS modifies only the starting-state distribution while preserving the game’s information structure and dynamics.

\subsection{Augmented initial-state distribution}

DAGS defines an augmented initial-state distribution $\tilde{d}_0$ that places probability mass on both the root states and offline states. We write
\begin{equation}
  \tilde{d}_0^\beta
  =
  (1 - \beta)\, d_0
  +
  \beta \, d_{\mathcal{D}},
  \qquad
  \beta \in [0,1],
  \label{eq:augmented-init}
\end{equation}
where $d_{\mathcal{D}}$ is a uniform distribution over $\mathcal{D}$,
\(
  d_{\mathcal{D}}(s)
  =
  \frac{1}{M} \sum_{m=1}^M \mathbf{1}\{s = s_m\}.
\)
The mixture coefficient $\beta$ controls the fraction of episodes that begin from offline states. The special case $\beta = 0$ recovers standard root-start self-play, while $\beta = 1$ corresponds to always starting from offline states. We leave more sophisticated choices of $d_{\mathcal{D}}$ to future work, focusing here on analyzing the effects of state resets in this basic form and the equilibrium bias they introduce in imperfect-information games.

Training with DAGS can be viewed as solving an \emph{augmented game} in which the only change is that episodes start from $\tilde{d}_0^\beta$ instead of $d_0$. From a reinforcement-learning perspective, this directs exploration toward strategically relevant subgames discoverable from dataset states that are hard to reach by naive exploration from the root. From a game-theoretic perspective, however, replacing $d_0$ with $\tilde{d}_0^\beta$ in an imperfect-information game can alter beliefs over hidden information and shift the game’s equilibrium (Appendix~\ref{app:bias_analysis}).

To mitigate this equilibrium bias, we leverage multi-task learning with a task identification flag in the observation. The augmented starting distribution produces both root-start episodes (sampled from $d_0$) and dataset-start episodes (sampled from $d_{\mathcal{D}}$). We append a binary flag ($f=0$ for root-start, $f=1$ for dataset-start) to each player’s observation indicating which source the episode was drawn from. This conditions the policy so that dataset-start episodes benefit from DAGS-augmented exploration while root-start episodes learn an unbiased policy for the original game (Appendix~\ref{app:bias_analysis}), benefiting from transfer learning through shared network parameters. At evaluation time, we use only the $f=0$ task to recover play under the original starting-state distribution. In the special case $\beta = 1$, where no root-start episodes are generated during training, we evaluate with the flag set to dataset-start.

\subsection{Self-play training with DAGS}
\label{sec:dags-training}

We summarize the complete DAGS training procedure in Algorithm~\ref{alg:dags}. Collected trajectories are used for regularized PPO self-play updates (see Appendix~\ref{app:emag} for details).

\begin{algorithm}[t]
  \caption{Data-Augmented Game Starts (DAGS) with PPO self-play}
  \label{alg:dags}
  \begin{algorithmic}[1]
    \Require POSG $\mathcal{G}$ with initial distribution $d_0$, offline dataset $\mathcal{D}$,
             mixture parameter $\beta \in [0,1]$, joint policy $\pi_\theta = (\pi_{\theta_1}, \pi_{\theta_2})$, PPO hyperparameters
    \While{not converged}
      \State Initialize empty trajectory buffer $\mathcal{T}$
      \For{episode $= 1,\dots,K$}
        \State With probability $\beta$:
        \State \quad Sample initial state $s_0$ from $\mathcal{D}$; set $f \leftarrow 1$ \Comment{DAGS reset}
        \State Otherwise:
        \State \quad Sample initial state $s_0 \sim d_0$; set $f \leftarrow 0$ \Comment{root start}
        \State Roll out self-play episode from $s_0$ using joint policy $\pi_\theta(\cdot \mid o, f)$
        \State Add the resulting trajectory data to $\mathcal{T}$
      \EndFor
      \State $\theta \leftarrow \textsc{PPO Updates}(\theta, \mathcal{T})$
    \EndWhile
  \end{algorithmic}
\end{algorithm}

DAGS offloads the burden of exploring long coordinated control sequences into the offline data collection phase, allowing online self-play to start directly from states where hard-to-find strategically important transitions can be more easily discovered. In the following sections, we introduce analytically tractable benchmarks to evaluate this approach (Section~\ref{sec:benchmarks}) and show empirically that DAGS leads to substantially lower exploitability under fixed computational budgets.

%% file: sections/control_games.tex
\section{Benchmarking Exploration in Large Games}
\label{sec:benchmarks}

To study methods like DAGS in a controlled but challenging setting, we construct a family of benchmark games by applying two successive transformations to analytically tractable two-player zero-sum games from OpenSpiel \citep{lanctot2020openspielframeworkreinforcementlearning}. In this paper, we apply these transformations to Kuhn Poker and Goofspiel, as well as a custom counterexample game designed to expose equilibrium bias from start-state augmentation. The first transformation is to add a forfeit action to every decision node. Then, we replace each decision with a multi-step gridworld navigation task whose outcome determines the base-game action taken. The resulting games have long horizons and large state spaces but remain analytically reducible to their base game, enabling exact exploitability measurement.

\subsection{Forfeit (FF) Transformation}
\label{sec:forfeit}

Given a base two-player zero-sum game with utilities bounded in $[u_{\min}, u_{\max}]$, we augment every decision state with an additional forfeit action. If a player forfeits, the episode terminates immediately. The forfeiting player receives $u_{\min} - 1$ and the opponent receives $-(u_{\min} - 1)$. If both players forfeit simultaneously, player 1 is treated as the forfeiting player. This makes forfeiting strictly worse than any base-game outcome, modeling the consequence of failing to execute a required control task as a strictly dominated strategy.

\subsection{Control Game Transformation}
\label{sec:control_transform}

Given a base game augmented with forfeit actions, we replace each decision with a gridworld navigation problem. At each decision point, the acting player is placed at a fixed starting position on a $G \times G$ grid containing one designated action square for each available base-game action, with all action squares placed equidistant from the start. The player observes their grid position, the time remaining, and the base-game information state, and navigates for a fixed number of steps. Their final position determines the base-game action taken; if they are not on an action square when time expires, the forfeit action is selected instead.

Crucially, agents act independently during navigation and do not interact with or observe the opponent while selecting their action. Control-game policies are therefore analytically reducible to base (FF) game policies, i.e., the base game with added forfeit actions, enabling exact exploitability calculation even in games with significantly larger horizons.

To compute exploitability, we evaluate the control policy over all grid states and timer values for each base-game information state and compute the induced probability of terminating on each action square or forfeiting. These probabilities define an equivalent mixed strategy over base-game actions plus forfeit, from which we compute exact exploitability in the base game. The cost of this reduction is proportional to the number of information states, grid cells, and timer steps, and is orders of magnitude smaller than the cost of self-play training in the configurations we test. In the following section, we instantiate these benchmarks at varying grid sizes to strain exploration and evaluate DAGS.

%% file: sections/nate_experiments.tex
\section{Experiments}

We evaluate whether DAGS accelerates equilibrium convergence as exploration difficulty increases, and whether the equilibrium bias identified in Section~\ref{sec:dags} manifests empirically. Our testbed consists of two established benchmark games, Kuhn poker and 4-card Goofspiel, together with a counterexample game that we constructed to create a clear instance of equilibrium bias and to demonstrate that our proposed mitigation is effective (Figure~\ref{fig:counter_example}).

Each game is tested in three variations. The \textbf{Base (FF)} variant augments the original game only with a forfeit action. It has no control difficulty and is strategically equivalent to the control variants. It serves as an empirical lower bound on achievable exploitability. In the \textbf{Control Travel~10} and \textbf{Control Travel~20} variants, each non-forfeit base-game action requires the agent to first navigate a gridworld to the action's location, with a time limit equal to the required travel distance (10 and 20 respectively) that leaves zero room for error. These variants preserve the strategic structure of the base games while adding substantial exploration difficulty.

For each environment, we generate an offline dataset of 1000 games using a hard-coded policy that selects base-game actions uniformly at random and executes the correct control navigation. Because the behavioural policy is suboptimal, the dataset provides state coverage but does not constitute a low-exploitability policy.

We compare two self-play methods:
\textbf{PPO-Uniform}, which uses entropy regularization toward a uniform magnet~\citep{rudolph2025reevaluatingpolicygradientmethods}, and
\textbf{PPO-EMAg}, which uses KL regularization toward an exponential moving average of policy parameters~\citep{maidment2026emagnet}.
Base PPO hyperparameters follow \citet{rudolph2025reevaluatingpolicygradientmethods}.
Throughout, $\beta=0$ corresponds to standard self-play from the root (no DAGS) and serves as our baseline.
For each combination of method, environment, variation, and $\beta$, we sweep over regularization hyperparameters and select the best configuration. Full sweep details are provided in Appendix~\ref{app:hyperparams}.

\subsection{DAGS Improves Scaling in Kuhn and Goofspiel}

\begin{figure}[h!]
    \centering
    \includegraphics[width=\linewidth]{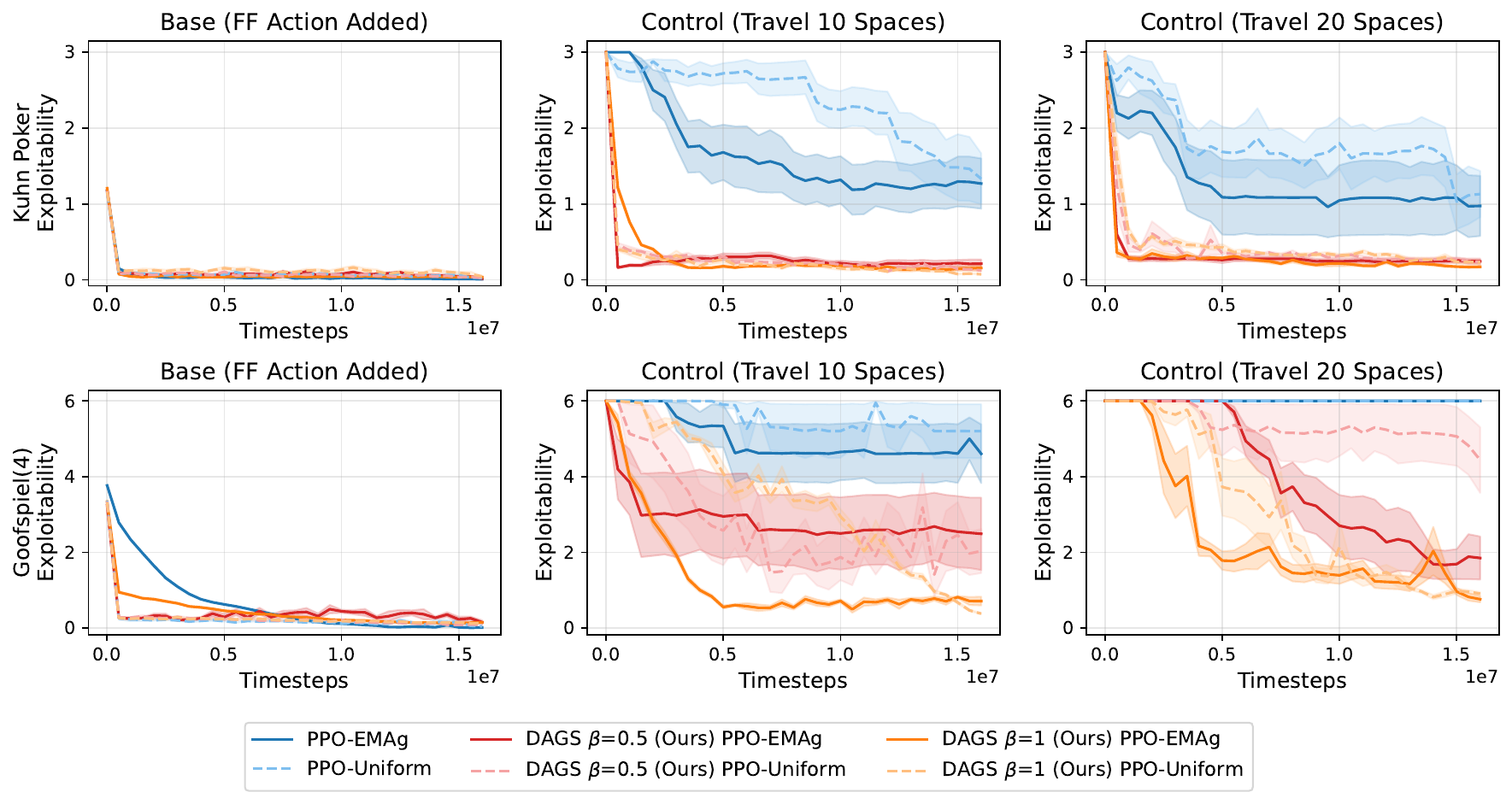}
    \caption{Exploitability over training for Kuhn poker and 4-card Goofspiel across game variations and $\beta$ values, using both PPO-Uniform and PPO-EMAg. DAGS ($\beta > 0$) increasingly outperforms standard self-play ($\beta = 0$) as exploration difficulty grows.}
    \label{fig:scaling}
\end{figure}

We first investigate how DAGS performs on Kuhn Poker and Goofspiel as exploration difficulty increases, starting with the base (FF) version of each game, then increasing difficulty in the Control Travel~10 variant and again in Control Travel~20 (Figure~\ref{fig:scaling}).

In the base (FF) variants, all methods and $\beta$ values converge to similarly low exploitability, confirming that exploration is not a bottleneck in these games. However, as exploration complexity increases in the gridworld Control variants, both games show the same pattern: the gap between DAGS and the $\beta=0$ baseline widens, as standard self-play fails to discover how to navigate to certain action squares within the time limit, effectively forfeiting at those decision nodes. DAGS, by contrast, initializes episodes at states where agents can immediately discover how to perform these navigation tasks, maintaining lower exploitability. The effect is especially pronounced in Goofspiel. At Travel~10, the Goofspiel standard self-play stalls at an exploitability of ${\sim}4$-$6$, while DAGS variants converge to substantially lower values. At Travel~20, standard self-play fails entirely; Goofspiel with $\beta=0$ plateaus at ${\sim}6.0$, representing no meaningful learning, while DAGS continues to converge in both games. In Goofspiel, $\beta=1$ outperforms $\beta=0.5$, likely because root-start episodes in high-difficulty control variants are largely uninformative (the agent forfeits), so $\beta=0.5$ effectively wastes half its training compute on episodes that provide a less useful learning signal.

Interestingly, despite the theoretical possibility of equilibrium bias (Section~\ref{sec:dags}), we see little evidence of this in either game. All $\beta$ values converge to similar exploitability in the base variants, and since control-game policies can be reduced to base-game policies, this absence of bias carries over to the control variants as well. To understand when bias does arise, we turn to a purpose-built counterexample designed to elicit it.

\begin{figure}[h]
    \centering
    \includegraphics[width=0.7\linewidth]{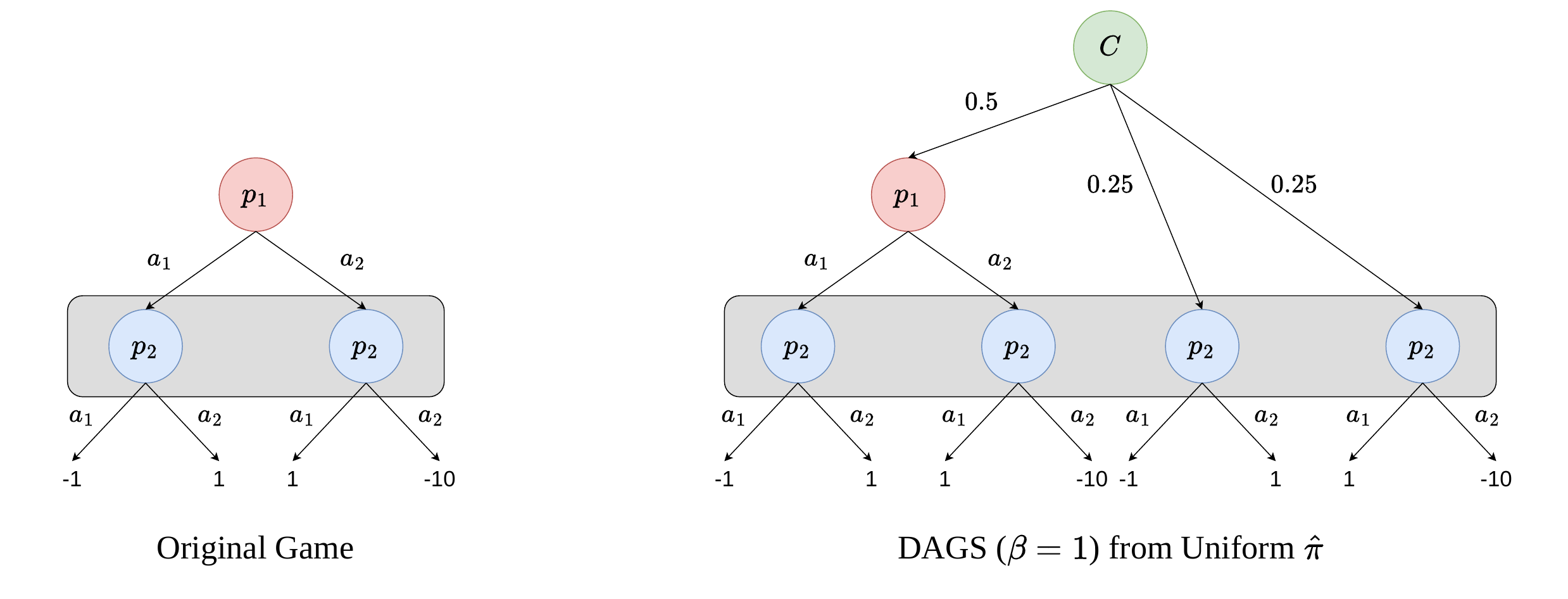}
    \caption{The counterexample game, a two-player zero-sum game whose NE changes when DAGS with $\beta = 1$ augments the initial state distribution using states from a strategically uniform behavioral policy (Appendix~\ref{app:environments}). The grey box is a single information state that appears identical to player 2. In the original game (left), $p_2$'s posterior over $p_1$'s action depends entirely on $p_1$'s strategy. With $\beta=1$ (right), the chance node resets directly to $p_2$'s decision nodes with equal probability regardless of $p_1$'s action, biasing $p_2$'s belief of the game state.}
    \label{fig:counter_example}
\end{figure}

\subsection{Equilibrium Bias and Mitigation in the Counterexample Game}
\label{sec:ce}

\begin{figure}[h]
    \centering
    \includegraphics[width=0.7\linewidth]{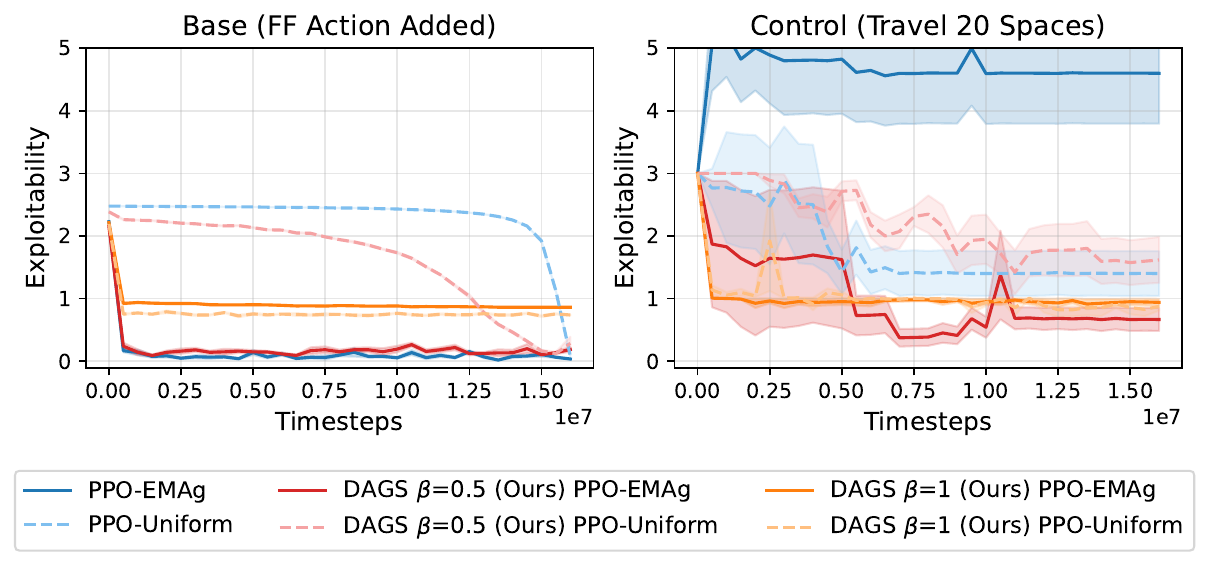}
    \caption{Exploitability over training for the counterexample game across Base (FF) and Control Travel 20 variants. $\beta=1$ induces equilibrium bias (high exploitability) even in the base game. $\beta=0$ avoids bias but fails to explore in the control variant. $\beta=0.5$ with a task-identifying observation flag achieves effective exploration while mitigating the equilibrium bias.}
    \label{fig:ce_results}
\end{figure}

In the counterexample game (Figure~\ref{fig:counter_example}), augmenting the start-state distribution alters beliefs over private information and shifts the equilibrium. Figure~\ref{fig:ce_results} shows results for the Base (FF) and Control Travel~20 variants.
In the base game, standard self-play ($\beta=0$) converges to near-zero exploitability with both methods, and $\beta=0.5$ also converges well. However, $\beta=1$ performs poorly. Exploitability settles above $0.7$, confirming that the augmented start-state distribution distorts beliefs over hidden information and shifts the equilibrium away from the original game's NE. Since the base game provides an empirical lower bound on achievable exploitability, this bias necessarily persists in the control variant. In Control Travel~20, the tradeoff between exploration benefit and equilibrium bias becomes apparent.

Without DAGS, exploitability remains very high. With $\beta=1$, agents explore effectively but converge to the biased equilibrium. Setting $\beta=0.5$ provides a middle ground. A task-identifying observation flag (Section~\ref{sec:dags}) partitions the policy into an $f=1$ task that benefits from DAGS-augmented exploration and an $f=0$ task that learns an unbiased policy under the original starting distribution. The $f=0$ branch benefits from multi-task transfer through shared representations learned on the easier $f=1$ task. At evaluation, setting $f=0$ recovers unbiased play, whereas $\beta=1$ must evaluate with $f=1$ since no root-start episodes were seen during training.

These results demonstrate that when DAGS introduces equilibrium bias, training with intermediate $\beta$ and a task-identifying observation flag mitigates the bias while still leveraging augmented exploration.

\section{Conclusion and Future Work}
We present \methodlong{} (\methodshort{}), a starting-state sampling strategy that initializes self-play episodes at intermediate states drawn from offline gameplay data, leveraging demonstrations for state coverage rather than policy initialization. To evaluate \methodshort{} in a controlled setting, we introduce analytically reducible benchmark games that wrap standard two-player zero-sum games with gridworld control tasks, greatly increasing horizon and exploration difficulty while preserving tractable exploitability computation. Empirically, \methodshort{} outperforms standard self-play in control variants of Kuhn poker, 4-card Goofspiel, and a counterexample game designed to expose equilibrium bias, enabling regularized policy gradient methods to solve games with significantly larger exploration challenges under fixed computational budgets. We also show that augmenting starting-state distributions in imperfect-information games can bias the learned equilibrium, and we provide a straightforward mitigation through multi-task training with observation flags that recovers unbiased play in the counterexample game.

Our gridworld benchmarks use simple mechanics by design to preserve analytical reducibility. In this setting, warm-starting RL with BC may also be effective from limited data since the observation-action mapping is relatively straightforward to clone. In domains with complex, context-dependent mechanics (e.g., fighting games or first-person shooters), we expect DAGS's advantage over BC-based initialization to grow as demonstrations become scarcer, since BC needs substantially more data to learn useful behavior while DAGS requires only that the dataset reach states from which important strategies can be discovered. We see applying DAGS to complex games where exact exploitability is not tractable as promising future work.

%% file: sections/appendix.tex
\section{Hyperparameter and Training Details}
\label{app:hyperparams}

Table~\ref{tab:ppo_hyperparams} lists the base PPO hyperparameters used across all experiments. Table~\ref{tab:sweep} lists the hyperparameters swept for each method.

\begin{table}[h]
\centering
\caption{Base PPO hyperparameters (shared across all methods and environments).}
\label{tab:ppo_hyperparams}

\begin{tabular}{ll}
\toprule
Hyperparameter & Value \\
\midrule
Learning rate & $2.5 \times 10^{-4}$ \\
Clip ratio ($\epsilon$) & $0.1$ \\
PPO epochs per update & $4$ \\
Batch size (num\_envs $\times$ num\_steps) & $8 \times 128 = 1024$ \\
GAE $\lambda$ & $0.95$ \\
Discount $\gamma$ & $1.0$ \\
Value function coefficient & $0.5$ \\
Max gradient norm & $0.5$ \\
Total training steps & $16 \times 10^6$ \\
\bottomrule
\end{tabular}
\end{table}

\begin{table}[h]
\centering
\caption{Hyperparameters swept for each method. For each combination of environment, game variation, and $\beta$, we select the configuration with the lowest final exploitability. PPO-EMAg used Bayesian optimization (200 trials) and PPO-Uniform used a grid search (56 combinations).}
\label{tab:sweep}

\small
\begin{tabular}{llp{6.2cm}}
\toprule
Method & Hyperparameter & Values swept \\
\midrule
PPO-Uniform & Entropy coefficient & \{0, 0.001, 0.01, 0.02, 0.05, 0.1, 0.25, 0.5, 1, 2, 4, 8, 16, 32\} \\
PPO-Uniform & Anneal entropy & \{True, False\} \\
PPO-Uniform & Anneal learning rate & \{True, False\} \\
\midrule
PPO-EMAg & KL coefficient ($\lambda_{\mathrm{KL}}$) & \{0.05, 0.1, 0.25, 0.5, 1, 2, 4, 8, 16, 32\} \\
PPO-EMAg & EMA step size ($\tau$) & \{0.1, 0.05, 0.01, 0.005, 0.001, 0.0005, 0.0001\} \\
PPO-EMAg & Entropy coefficient & \{0, 0.001, 0.01\} \\
PPO-EMAg & Anneal learning rate & \{True, False\} \\
\bottomrule
\end{tabular}
\end{table}

\textbf{Network architecture.}
Following \citet{rudolph2025reevaluatingpolicygradientmethods}, both the actor and critic are fully-connected networks with 3 hidden layers of 512 units and Tanh activations. Weights are initialized with orthogonal initialization (output layer standard deviation of 0.01 for the actor and 1.0 for the critic). A single set of parameters is shared across both players in symmetric self-play, with player identity encoded in the observation.

\section{PPO Regularization Details}
\label{app:emag}

Algorithm~\ref{alg:dags} abstracts the policy update as \textsc{PPOUpdates}$(\theta, \mathcal{T})$. We describe the two regularization schemes used in our experiments.

\textbf{PPO-Uniform} \citep{rudolph2025reevaluatingpolicygradientmethods} augments the standard clipped PPO objective with an entropy bonus that regularizes the policy toward the uniform distribution:
\begin{equation}
  \mathcal{L}_{\text{PPO-Uniform}}(\theta)
  = \mathcal{L}_{\text{PPO}}(\theta)
  - \lambda_H \, \mathbb{E}_{s \sim \mathcal{T}}\!\bigl[H\bigl(\pi_\theta(\cdot \mid s)\bigr)\bigr],
\end{equation}
where $\mathcal{L}_{\text{PPO}}(\theta)$ is the standard clipped surrogate objective \citep{schulman2017proximalpolicyoptimizationalgorithms}, $H(\cdot)$ denotes the entropy, and $\lambda_H > 0$ controls the regularization strength. \citet{rudolph2025reevaluatingpolicygradientmethods} show that this performs similarly to magnetic mirror descent toward a uniform magnet.

\textbf{PPO-EMAg} \citep{maidment2026emagnet} replaces the uniform magnet with an exponential moving average (EMA) of the policy parameters. The objective becomes
\begin{equation}
  \mathcal{L}_{\text{PPO-EMAg}}(\theta)
  = \mathcal{L}_{\text{PPO}}(\theta)
  + \lambda_{\mathrm{KL}} \, \mathbb{E}_{s \sim \mathcal{T}}\!\bigl[D_{\mathrm{KL}}\bigl(\pi_{\theta_{\mathrm{mag}}}(\cdot \mid s) \,\|\, \pi_\theta(\cdot \mid s)\bigr)\bigr],
\end{equation}
where $\theta_{\mathrm{mag}}$ denotes the magnet parameters. After each PPO update step, the magnet is updated as
\begin{equation}
  \theta_{\mathrm{mag}} \leftarrow (1 - \tau) \, \theta_{\mathrm{mag}} + \tau \, \theta,
\end{equation}
with step size $\tau \in (0,1]$. This yields lower exploitability than uniform regularization across several benchmark games \citep{maidment2026emagnet}.

\section{Equilibrium Bias Analysis}
\label{app:bias_analysis}

We formalize the observation that augmenting the start-state distribution can shift a game's Nash equilibrium, and show that conditioning on an episode-source flag with intermediate $\beta$ recovers the original equilibrium.

\begin{proposition}[Start-state augmentation can shift the NE]
\label{prop:bias}
There exist two-player zero-sum POSGs and behavioral policies $\hat{\pi}$ such that, for any $\beta > 0$, the set of Nash equilibria of the game played under $\tilde{d}_0^\beta = (1-\beta)\,d_0 + \beta\,d_{\mathcal{D}}$ differs from the set of Nash equilibria under $d_0$.
\end{proposition}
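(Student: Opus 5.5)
The proof is by explicit construction, modeled on the counterexample game of Figure~\ref{fig:counter_example}. Player~1 moves first and privately chooses $L$ or $R$. Player~2 then acts at a single information state $I$ containing both resulting histories $h_L$ and $h_R$, choosing $\ell$ or $r$. Payoffs form a matching-pennies structure with asymmetric magnitudes, for instance $u_1(L,\ell)=a$, $u_1(L,r)=0$, $u_1(R,\ell)=0$, $u_1(R,r)=1$ with $a\neq 1$. The behavioral policy $\hat\pi$ is uniform. We take $\mathcal{D}=\{h_L,h_R\}$, the player-2 decision states reached by $\hat\pi$, each with weight $1/2$. Under DAGS, episodes reset directly into $h_L$ or $h_R$, where player~1 has no remaining decision.

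\textbf{Original game.} The first step is to compute the unique NE under $d_0$. Player~1 mixes to make player~2 indifferent: $p(L)\,a = (1-p(L))\cdot 1$, so $p(L)=1/(1+a)$. Player~2 mixes to make player~1 indifferent: $q(\ell)=1/(1+a)$. Uniqueness holds because the $2\times 2$ matrix game has no pure equilibrium.

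\textbf{Augmented game.} Under $\tilde d_0^\beta$, player~2's belief at $I$ becomes a mixture. It assigns weight $(1-\beta)p(L)+\beta/2$ to $h_L$ and the complement to $h_R$. Player~1's strategy at the root affects the payoff only on the $(1-\beta)$ fraction of root-start episodes. The payoff is
\[
(1-\beta)\,u(p,q)+\beta\, v(q), \qquad v(q)=\tfrac12\bigl(qa+(1-q)\bigr).
\]

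The key step is to show that the original pair $(p^*,q^*)$ is no longer an equilibrium. At $q^*$, player~1 is still indifferent, because $v$ does not depend on $p$. Player~2's indifference, however, now requires
\[
\bigl((1-\beta)p+\tfrac{\beta}{2}\bigr)a=(1-\beta)(1-p)+\tfrac{\beta}{2}.
\]
At $p=p^*$ this reduces to $\tfrac{\beta}{2}a=\tfrac{\beta}{2}$, which fails for any $\beta>0$ when $a\neq 1$. So player~2 strictly prefers a pure action against $p^*$, and $(p^*,q^*)$ is not an NE of the augmented game.

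Since the original NE set is the singleton $\{(p^*,q^*)\}$, it suffices to exhibit one original equilibrium that fails in the augmented game. This avoids characterizing the augmented game's equilibria. For $\beta=1$ the augmented game also has no player-1 decision at all, which gives a further difference.

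\textbf{Main obstacle.} The delicate point is formalization rather than computation. We must check that resetting to $h_L$ and $h_R$ is consistent with the POSG formalism, namely that player~2 observes only $I$. We must also fix what "set of Nash equilibria" means when player~1's choices become irrelevant, as at $\beta=1$.

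To handle this, we compare equilibria as behavioral-strategy profiles over the same information states. The game must be kept symmetric enough in its information structure that both games share the same strategy spaces. Optionally, the forfeit and control transformations of Section~\ref{sec:benchmarks} can be applied, since they preserve the reduction.
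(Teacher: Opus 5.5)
Your proposal is correct and is essentially the paper's argument: a $2\times 2$ hidden-move game with a unique fully mixed NE and a uniform $\hat\pi$, where resets into $p_2$'s information state break $p_2$'s indifference at the original equilibrium for every $\beta>0$. The paper uses the concrete matrix $\begin{pmatrix}-1&1\\1&-10\end{pmatrix}$ and keeps mass $\tfrac12$ of $d_{\mathcal{D}}$ on the root, which only rescales your effective reset weight from $\beta$ to $\beta/2$.

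One small fix: you need $a>0$ as well as $a\neq 1$. Otherwise $p^*=1/(1+a)$ is not an interior probability, and for $a<0$ row $R$ strictly dominates, so the unique pure NE $(R,\ell)$ survives the augmentation.
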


\begin{proof}
We give a constructive proof using the counterexample game in Figure~\ref{fig:counter_example}. Since $p_2$ cannot observe $p_1$'s action, the game is strategically equivalent to a simultaneous-move matrix game with $p_1$'s payoff matrix
\[
A = \begin{pmatrix} -1 & 1 \\ 1 & -10 \end{pmatrix},
\]
where rows correspond to $p_1$'s actions $(a_1, a_2)$ and columns to $p_2$'s actions $(a_1, a_2)$. Let $p$ denote $p_1$'s probability of playing $a_1$ and $q$ denote $p_2$'s probability of playing $a_1$, with mixed strategies $\mathbf{p} = (p,\; 1{-}p)^\top$ and $\mathbf{q} = (q,\; 1{-}q)^\top$. In the original game, $p_1$ maximizes and $p_2$ minimizes $\mathbf{p}^\top A\, \mathbf{q}$.

\textbf{Original game NE.}\quad
The unique NE is $p = q = \tfrac{11}{13}$, with game value $-\tfrac{9}{13}$ for $p_1$. Both players receive equal expected payoff from either action, and no pure-strategy NE exists.

\textbf{DAGS breaks this equilibrium.}\quad
Let $\hat{\pi}$ be uniform over actions. Since every trajectory visits the root and one of $p_2$'s nodes, $d_{\mathcal{D}}$ places mass $\tfrac{1}{2}$ on the root, $\tfrac{1}{4}$ on ``$p_2$ after $a_1$'', and $\tfrac{1}{4}$ on ``$p_2$ after $a_2$''. Under $\tilde{d}_0^\beta$ with $\beta > 0$, a fraction of episodes reset directly to $p_2$'s nodes with equal probability, bypassing $p_1$. Since $p_2$ cannot distinguish root-start from dataset-start episodes, $p_2$ must use the same strategy in both. In dataset-start episodes, $p_2$ faces each of $p_1$'s actions with equal probability regardless of $p_1$'s strategy. Under this uniform distribution ($p = \tfrac{1}{2}$), $p_1$'s expected payoff from $p_2$'s columns is $a_1\!: 0$ versus $a_2\!: -\tfrac{9}{2}$, so $p_2$ strictly prefers $a_2$. At the original NE, $p_2$ is exactly indifferent between columns. Mixing in any $\beta > 0$ fraction of dataset-start episodes breaks this indifference in favor of $a_2$, giving $p_2$ a profitable deviation. Therefore, the original NE is not a NE of the augmented game for any $\beta > 0$.
\end{proof}

\begin{proposition}[Flag conditioning recovers the original NE]
\label{prop:flag}
Let $\beta \in (0,1)$ and let each player's policy be conditioned on the binary flag $f \in \{0,1\}$, where $f=0$ for episodes drawn from $d_0$ and $f=1$ for episodes drawn from $d_{\mathcal{D}}$. If $(\pi_1^*, \pi_2^*)$ is a Nash equilibrium of the augmented game with flag-conditioned policies, then the restriction $(\pi_1^*(\cdot \mid \cdot, f\!=\!0),\, \pi_2^*(\cdot \mid \cdot, f\!=\!0))$ is a Nash equilibrium of the original game under $d_0$.
\end{proposition}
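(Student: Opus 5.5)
The proof rests on one structural fact: with the flag in every observation, the augmented game splits into two subgames that do not interact. Because $f$ is fixed at episode start and observed by both players at every step, no information state with $f=0$ coincides with one with $f=1$. A flag-conditioned behavioral strategy $\pi_i$ is therefore exactly a pair $(\pi_i^{0},\pi_i^{1})$ with $\pi_i^{f}=\pi_i(\cdot\mid\cdot,f)$, and the two components can be chosen independently.

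First I write the augmented game's expected payoff as a mixture over the flag. With $V_{d_0}$ the value of a joint policy when episodes start from $d_0$, and $V_{d_{\mathcal{D}}}$ the value when they start from $d_{\mathcal{D}}$,
\[
U(\pi_1,\pi_2) = (1-\beta)\,V_{d_0}(\pi_1^{0},\pi_2^{0}) + \beta\,V_{d_{\mathcal{D}}}(\pi_1^{1},\pi_2^{1}).
\]
This holds because the flag is drawn before the start state, root-start episodes are generated exactly as in the original game, and only the $f=0$ component acts in them. Dataset-start episodes likewise use only the $f=1$ component. Dynamics, rewards and the observations other than the flag are unchanged, so $V_{d_0}(\pi_1^{0},\pi_2^{0})$ is exactly the original game's payoff under $d_0$.

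Second, I argue by contradiction. Suppose the restriction $(\pi_1^{*0},\pi_2^{*0})$ is not a NE of the original game. Then some player, say player 1, has a strategy $\sigma$ with $V_{d_0}(\sigma,\pi_2^{*0}) > V_{d_0}(\pi_1^{*0},\pi_2^{*0})$; player 2 is handled symmetrically with signs flipped. Consider the flag-conditioned deviation that plays $\sigma$ when $f=0$ and $\pi_1^{*1}$ when $f=1$. It is a legal strategy in the augmented game because the $f=0$ and $f=1$ information sets are disjoint. Under it, the $\beta$-term of $U$ is unchanged. The $(1-\beta)$-term strictly increases, since $\beta<1$ gives the root-start component positive weight. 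So $U$ strictly increases, contradicting that $(\pi_1^*,\pi_2^*)$ is a NE.

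The main obstacle is pinning down the decomposition of $U$ rigorously. Two points need explicit care. First, root-start trajectories must induce exactly the original game's trajectory distribution under $d_0$ and the $f=0$ policies. Second, the flag partitions every information set, which is exactly what fails without the flag: in the proof of Proposition~\ref{prop:bias}, $p_2$ could not separate the two episode types. I would spell out that $f$ is appended to the observation at every time step, so each history carries it and behavioral strategies conditioned on observation histories factor by $f$.

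I would also remark that $\beta<1$ is essential. At $\beta=1$ the root-start component has zero weight, so the $f=0$ restriction is unconstrained. Note as well that $\beta>0$ is not needed for this direction of the argument.
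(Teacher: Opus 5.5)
Your proof is correct and takes the same approach as the paper. The flag separates the $f=0$ and $f=1$ information states, so expected return splits into a $d_0$ term and a $d_{\mathcal{D}}$ term that depend on disjoint policy components, and a deviation on the $f=0$ component leaves the $f=1$ term unchanged. Your explicit mixture identity and your use of $1-\beta>0$ to make the improvement strict spell out what the paper's shorter argument leaves implicit.
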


\begin{proof}
Since $f$ is a deterministic function of the episode's start-state source and is included in every observation, each player's policy $\pi_i(\cdot \mid o, f)$ decomposes into two independent tasks, one for $f=0$ episodes and one for $f=1$ episodes. The $f=0$ episodes are drawn from $d_0$ and evolve under the original transition dynamics, so they constitute an instance of the original game. The $f=1$ episodes are drawn from $d_{\mathcal{D}}$ and constitute a separate game under the augmented distribution. Because the flag partitions each player's policy into disjoint parameters ($\pi_i(\cdot \mid \cdot, f\!=\!0)$ and $\pi_i(\cdot \mid \cdot, f\!=\!1)$) that affect disjoint terms in each player's expected return, a unilateral deviation in one sub-policy cannot affect the other sub-game's payoff. In particular, the $f=0$ task is optimized over episodes drawn from $d_0$ alone, and any NE of this sub-problem is a NE of the original game. Therefore, the $f=0$ restriction of any NE of the flag-conditioned augmented game is a NE of the original game.
\end{proof}

Proposition~\ref{prop:flag} guarantees that the $f\!=\!0$ policy is unbiased in the game-theoretic sense. In practice, the $f\!=\!1$ task may additionally benefit learning by providing a richer distribution of training states, enabling multi-task transfer of learned representations to the harder $f\!=\!0$ task. Our experiments in Section~\ref{sec:ce} confirm this. Setting $\beta=0.5$ with flag conditioning recovers low exploitability in the counterexample game while still benefiting from DAGS-augmented exploration in the control variants.

\section{Environment and Benchmark Details}
\label{app:environments}

\textbf{Base games.}
We use two base games from OpenSpiel \citep{lanctot2020openspielframeworkreinforcementlearning}: Kuhn poker (3 cards, 2 players) and 4-card Goofspiel (4 cards per suit, 2 players). Both are small enough for exact exploitability computation yet strategically nontrivial.

\textbf{Forfeit transformation.}
As described in Section~\ref{sec:forfeit}, we augment every decision node with a forfeit action. The forfeit payoff is $u_{\min} - 1$ for the forfeiting player and $-(u_{\min} - 1)$ for the opponent, ensuring forfeiting is strictly dominated by any base-game outcome.

\textbf{Control game transformation.}
At each base-game decision point, the acting player is placed at a fixed starting position on a $G \times G$ grid. One designated action square is placed for each available non-forfeit base-game action, all at equal Manhattan distance from the start. The player's observation at each control step consists of:
\begin{itemize}
    \item Current player's grid position (row, column)
    \item Time remaining (steps until deadline)
    \item The base-game information state (the same observation the player would receive in the base game)
    \item The episode-source flag $f \in \{0, 1\}$
\end{itemize}
Players do not observe the opponent's grid position during navigation, which allows control-game policies to be analytically reduced to policies in the base game. The grid is empty aside from the action squares (no obstacles or other features). The action space during navigation is $\{\text{up, down, left, right, stay}\}$. The time limit equals the Manhattan distance from start to the action squares (the ``travel distance''), leaving zero room for wasted steps. If the player does not occupy an action square when time expires, the forfeit action is taken.

\textbf{Game variations.}
\begin{itemize}
    \item \textbf{Base (FF):} Base game with forfeit actions added and no gridworld navigation. Strategically equivalent to the control variants.
    \item \textbf{Control Travel~10:} Grid sized so that the travel distance from start to each non-forfeit action square is 10 steps. Kuhn and counterexample variants use an $11 \times 11$ grid with 2 action squares and Goofspiel uses a $13 \times 13$ grid with 4 action squares.
    \item \textbf{Control Travel~20:} Travel distance of 20 steps. Kuhn and counterexample variants use a $21 \times 21$ grid with 2 action squares and Goofspiel uses a $25 \times 25$ grid with 4 action squares.
\end{itemize}

\textbf{Counterexample game.}
The counterexample game (Figure~\ref{fig:counter_example}) is a two-player zero-sum game in which $p_1$ chooses $a_1$ or $a_2$, after which $p_2$ chooses $a_1$ or $a_2$ without observing $p_1$'s action (both of $p_2$'s decision nodes form a single information set). The equivalent normal-form payoff matrix for $p_1$ is given in the proof of Proposition~\ref{prop:bias}. This game is designed so that augmenting the starting-state distribution with states from a uniform behavioral policy shifts $p_2$'s beliefs over $p_1$'s action, changing the equilibrium.

\textbf{Dataset generation.}
For each environment, we generate an offline dataset of 1000 complete episodes. The hard-coded behavioral policy selects base-game actions uniformly at random and executes optimal (shortest-path) navigation in the control gridworld to reach the corresponding action square. This policy is mechanically competent but strategically suboptimal, providing broad state coverage without constituting a low-exploitability strategy.